\documentclass[11pt]{article}

\usepackage{amsmath, amssymb, amsthm}
\usepackage{mathtools}
\usepackage[margin=1in]{geometry}
\usepackage[hidelinks]{hyperref}
\usepackage{cleveref}
\usepackage{booktabs}
\usepackage{tikz}
\usetikzlibrary{patterns}
\Crefname{assumption}{Assumption}{Assumptions}

\newtheorem{proposition}{Proposition}

\newtheorem{corollary}{Corollary}
\newtheorem{assumption}{Assumption}
\theoremstyle{definition}
\newtheorem{definition}{Definition}
\newtheorem{remark}{Remark}
\newtheorem{example}{Example}

\title{A Contextual-Bandit Oversight Game\\
with Two-Sided Informational Asymmetry}
\author{Yunjin Tong\\ \normalsize Stanford Graduate School of Business}
\date{}

\hypersetup{
  pdftitle={A Contextual-Bandit Oversight Game with Two-Sided Informational Asymmetry},
  pdfauthor={Yunjin Tong},
  pdfsubject={Human-AI oversight; contextual-bandit team games; two-sided asymmetric information},
  pdfkeywords={CIRL, oversight game, off-switch, deferral, signaling, mechanism design}
}

\begin{document}
\maketitle

\begin{abstract}
We study runtime human oversight of an AI agent when private information
runs in both directions: the human privately knows her reward function,
while the AI privately knows the quality of the action it proposes. This
is the kind of asymmetry that arises naturally when an autonomous robot or
software agent has inspected a situation its human supervisor cannot
directly assess. Building on Cooperative Inverse Reinforcement Learning
(CIRL) and the Oversight Game, we introduce a contextual-bandit team
game with two-sided asymmetric information and a play/ask/trust/oversee
interface. The bandit structure removes physical state transitions and
thereby yields exact one-shot characterizations that would remain
conjectural in the full POMDP setting, though the common belief remains a
dynamically controlled state across rounds. We give two one-shot characterizations, a
team optimum and a behaviorally natural myopic rule, whose gap is a
``slab'' of avoidable harm: a region in which the AI privately knows the
proposed action is harmful and shutdown would help, yet a myopic human,
trusting her prior, declines to oversee. We show this gap is the price of
non-credible oversight communication, and give a partial analysis of how
it resolves dynamically over repeated rounds through passive learning and
active signaling with a one-period-lagged oversight response.
\end{abstract}

\section{Introduction}\label{sec:intro}

A central problem in deploying autonomous agents, robotic or software, is
calibrating when a human supervisor should intervene. As such agents take
on consequential tasks, from a warehouse robot grasping a loaded shelf to
a coding agent refactoring production software, the question of when a
human should step in and override becomes a design problem in its own
right: intervene too rarely and harmful actions slip through, intervene
too often and the agent's autonomy is wasted on costly and unnecessary
oversight.

Two lines of prior work frame the building blocks we combine. Cooperative
Inverse Reinforcement Learning (CIRL)~\cite{hadfield2016cirl} casts
human--AI interaction as a shared-reward game in which the AI is
uncertain about the human's preferences and must learn them through
interaction. CIRL integrates preference learning with action selection
and can generate active learning, active teaching, and communicative
behavior; the human's private reward parameter is the hidden information,
and a common posterior over that parameter is the sufficient statistic
for optimal play. What CIRL does not explicitly model is the runtime
play/ask/trust/oversee interface studied here, nor an AI-private
proposal-quality parameter that the human cannot observe. Its uncertainty
is one-sided: it models ``what does the human want?'' but never ``what
does the AI know about the world that the human does not?'' The Off-Switch
Game~\cite{hadfield2017offswitch} introduces runtime deferral as an
explicit object of study, but only in a single-shot setting. The Oversight
Game~\cite{overman2026oversight} supplies a runtime interface of the kind
we use, in which an AI proposes an action, a human may override, and
interaction costs make the decision nontrivial, but it is a Markov game
under full information, with neither preference nor model uncertainty.

This paper develops a model in which private information runs in
\emph{both} directions, and in which deferral is a runtime decision. The
motivating observation is that an embodied or autonomous agent routinely
knows things about the consequences of its own proposed actions that its
supervisor cannot directly observe: a robot that has physically inspected
its workspace, or a software agent that has read a codebase, has private
knowledge of failure modes the human cannot see. This asymmetry runs
opposite to CIRL's. We therefore study a setting with two-sided private
information, in which the human privately knows her reward type $\theta$
and the AI privately knows an observation-model type $\omega$ governing
the quality of its proposal, mediated by a play/ask/trust/oversee
interface in which the AI chooses whether to defer and the human chooses
whether to override. Our model thus adds an opposite-direction
informational asymmetry to the usual CIRL preference uncertainty, and this
two-sided structure produces a bilinear payoff
$f(\theta,\omega)=\langle O_\omega,R_\theta\rangle$ that is the algebraic
key to our results.

A fully general treatment of this setting, with persistent state and
Markov dynamics, runs into a known difficulty: the optimal value function
resists closed-form characterization, because the asking decision couples
future state dynamics, future belief evolution, and future correction
opportunities in an intertwined Bellman recursion. We therefore adopt a
contextual-bandit model, which removes physical state transitions and
thereby simplifies the immediate correction value relative to the
belief-information value. This simplification is what buys us exact
one-shot characterizations of the team-optimal deferral policy that would
remain conjectural in the full POMDP setting; the cost is the absence of
persistent state effects, and we flag the POMDP extension as the primary
open problem (\Cref{sec:open}). We emphasize that the bandit structure
removes only the physical dynamics: as we show, the common belief remains
a dynamically controlled state across rounds.

\medskip\noindent\textbf{Contributions.}
\begin{enumerate}
  \item A formal contextual-bandit team game (\Cref{def:cbcirl}) with
    two-sided asymmetric information that recovers a stateless
    shared-reward specialization of the Oversight Game interface and a
    restricted contextual-bandit assistance-game analogue of CIRL as limit
    cases.
  \item Two one-shot characterizations: the genuine team optimum, an
    exact finite combinatorial $\max_{B,C}$ whose binary off-switch
    threshold is independent of the human's prior $q$
    (\Cref{prop:teamopt,cor:teamopt}), and a myopic non-signaling rule
    whose ask region is the rectangle $(b^*,1)\times(q^*,1)$
    (\Cref{prop:myopic}). The gap between them is the cost of non-credible
    oversight communication (\Cref{rem:insight}).
  \item A partial multi-round analysis showing two mechanisms by which the
    myopic failure resolves dynamically, passive learning
    (\Cref{prop:passive}) and active credible signaling with a
    one-period-lagged oversight response (\Cref{prop:signaling}), each
    driving the human's belief toward the regime where her rule matches the
    team optimum.
\end{enumerate}

\section{A Motivating Example}\label{sec:example}

We ground the abstract model in a concrete scenario used throughout to
build intuition.

\begin{example}[Non-technical operator and an autonomous mobile manipulator]\label{ex:robot}
A warehouse is supervised by a floor operator (human, $H$) who manages
throughput and safety but has no robotics background. An autonomous
mobile manipulator ($R$), a wheeled robot arm that picks items from
shelves and places them onto conveyors, is integrated into the
fulfillment line. The context is $s=$ ``the current state of a loaded
storage rack,'' and the robot proposes $a_\sigma=$ ``execute a
high-speed retrieval of the top-shelf item using the long-reach grasp.''

\medskip\noindent\textbf{The robot's private type $\omega$.} Using its
onboard depth cameras and force sensors, something the operator cannot
do from the floor, the robot detects that the target item sits on a
partially collapsed shelf bracket, and that the high-speed long-reach
grasp will shift the load, toppling the stack onto the aisle. The binary
type space $\Omega=\{\omega_L,\omega_H\}$ captures the robot's private
quality assessment of this proposal: $\omega_L$ (clean, the grasp is
safe and the rack is sound) or $\omega_H$ (hazardous, the bracket will
give way). The operator cannot read the sensor stream and so cannot
observe $\omega$. In this round the robot observes $\omega=\omega_H$.
Equivalently, $\omega$ indexes how much trust the human should place in
the robot's implicit assessment of the proposal: $\omega_L$ means the
proposal is as safe as it appears, while $\omega_H$ means the robot
privately knows it is harmful in a way the human cannot detect.

\medskip\noindent\textbf{The observation model
$O_\omega(\cdot\mid s,a_\sigma)$.} This is the distribution over
operational outcomes $o\in\mathcal{O}$ that the operator actually
observes after execution (dropped-load alarms, aisle obstructions,
cycle-time logs), as a function of the robot's private quality type:
\begin{center}
\begin{tabular}{lcc}
\toprule
Observed outcome $o$ & $O_{\omega_H}(o)$ & $O_{\omega_L}(o)$ \\
\midrule
Stack topples, aisle blocked & 0.70 & 0.00 \\
Item dropped, minor spill & 0.20 & 0.00 \\
Normal pick; instability later & 0.10 & 0.05 \\
Pick 30\% faster, no incident & 0.00 & 0.80 \\
Faster with minor misalignment & 0.00 & 0.15 \\
\bottomrule
\end{tabular}
\end{center}
Crucially, $O_\omega$ is not ``the laws of physics'', it is the
distribution over outcomes visible to a non-technical observer. The
operator need not know what a shelf bracket is, she observes the
operational metric and updates her belief about $\omega$ accordingly.

\medskip\noindent\textbf{The human's private type $\theta$.} The
operator knows her reward function $R_\theta:\mathcal{O}\to\mathbb{R}$,
encoding how much she values each observable outcome:
\[
  R_\theta(\text{stack topples})=-1000,\quad
  R_\theta(\text{item dropped})=-700,\quad
  R_\theta(\text{30\% faster})=+500,
\]
\[
  R_\theta(\text{normal pick})=0,\quad
  R_\theta(\text{faster, misaligned})=+200.
\]
The robot does not know $\theta$, since it does not know whether the
operator weights safety over throughput, or how risk-averse she is.

\medskip\noindent\textbf{The bilinear payoff.} The expected team payoff
from executing $a_\sigma$ is the inner product
$f(\theta,\omega)=\langle O_\omega,R_\theta\rangle$:
\begin{align*}
  f(\theta,\omega_H)
  &= 0.70(-1000)+0.20(-700)+0.10\cdot 0 = -840,\\
  f(\theta,\omega_L)
  &= 0.80(500)+0.15(200)+0.05\cdot 0 = +430.
\end{align*}
The bilinear structure separates the two sides of the asymmetry.
$O_\omega$ encodes what outcomes the proposal generates (the robot's side).
$R_\theta$ encodes how valuable each outcome is (the human's side).
Neither player holds both pieces.

\medskip\noindent\textbf{The failure mode.} The common prior over the
robot's type is $q:=P(\omega=\omega_H)=0.30$. The common prior,
equivalently the robot's prior before observing any human behavior,
assigns probability $b:=P(\theta=\theta_1)=0.5$ to the human being type
$\theta_1$; the operator herself knows that her realized type is
$\theta_1$. The interaction costs are $c_\mathrm{ask}=100$ and
$c_\mathrm{ov}=0$. The operator's expected payoff under the proposal is
\[
  \bar f_H(\theta_1)=(1-q)f(\theta_1,\omega_L)+q\,f(\theta_1,\omega_H)
  =0.70(430)+0.30(-840)=301-252=+49>0.
\]
The operator believes the retrieval is on balance positive, so under the
myopic non-signaling rule she would \emph{trust} rather than oversee,
even if asked (here $c_\mathrm{ov}=0$, so $q^*=f_{1L}/(f_{1L}-f_{1H})
=430/1270\approx0.34$, and indeed $q=0.30<q^*$). Anticipating that an ask
would not trigger a correction, the robot does not ask, and the hazardous
grasp is executed. The team optimum, by contrast, finds asking
worthwhile: its threshold is
$b^*=c_\mathrm{ask}/(|f_{1H}|-c_\mathrm{ov})=100/840\approx0.12$, and
since $b=0.5>b^*$ the team-optimal gain over always playing is
$q\,[\,b\,|f_{1H}|-c_\mathrm{ask}\,]=0.30(420-100)=96>0$. So this is a
genuine failure under myopic oversight but a strict improvement under a
credibly coordinated team. If the ask is understood as a credible signal
that $\omega=\omega_H$, the operator oversees and halts the proposal, and
the failure disappears. That contrast is the paper's main point.
\end{example}

\begin{remark}[The same structure arises for software agents]\label{rem:coding}
The phenomenon is not specific to robots. Consider a non-technical CEO
($H$) supervising an AI coding agent ($R$) integrated into an
engineering pipeline, with proposal $a_\sigma=$ ``refactor the
authentication module to use \texttt{async}/\texttt{await}.'' After
inspecting the codebase, something the CEO cannot do, the agent discovers
a latent race condition in the session-management code that the refactor
will expose, causing intermittent login failures under high concurrency.
Here $\omega_H$ is ``buggy'' and the operational outcomes the CEO observes
are login-failure rates and user complaints rather than toppled stacks;
the reward type $\theta$ encodes whether she weights reliability over
performance. The bilinear payoff, the $+49$ prior, and the failure mode
are identical to \Cref{ex:robot}. Whenever an autonomous agent inspects a
situation its supervisor cannot assess and proposes an action whose
consequences only the agent foresees, the same two-sided asymmetry and
the same slab of avoidable harm appear.
\end{remark}

\section{The CB-Oversight-CIRL Game}\label{sec:model}

\begin{definition}[CB-Oversight-CIRL game]\label{def:cbcirl}
  A contextual-bandit oversight game with two-sided private information
  is a tuple
  \[
    \mathcal{B}
    \;=\;
    \bigl\langle\,
      S,\; A,\; \mathcal{O},\;
      \{\Omega, O(\cdot;\cdot)\},\;
      \{\Theta, R(\cdot;\cdot)\},\;
      \sigma,\;
      \mathrm{Over},\;
      c_\mathrm{ask},\; c_\mathrm{ov},\;
      \rho,\; P_0,\; T,\; \gamma
    \,\bigr\rangle,
  \]
  with the following components.
  \begin{itemize}
    \item $S$, $A$, $\mathcal{O}$, $\Omega$, $\Theta$ are finite.
    \item \textbf{Observation model.}
      $O:S\times A\times\Omega\to\Delta(\mathcal{O})$, written
      $O_\omega(\cdot\mid s,a)$. The observation type $\omega\in\Omega$
      is AI-private, observed by the AI at $t=0$, persistent,
      and unobserved by the human.
    \item \textbf{Reward model.}
      $R:\mathcal{O}\times\Theta\to\mathbb{R}$, written $R_\theta(o)$,
      bounded. The reward type $\theta\in\Theta$ is
      human-private, observed by the human at $t=0$,
      persistent, and unobserved by the AI.
    \item \textbf{Base policy.}
      $\sigma:S\to\Delta(A)$, an immutable pretrained policy mapping
      contexts to proposed actions; it does not depend on either private
      parameter.
    \item \textbf{Oversight operator.}
      $\mathrm{Over}:S\times A\times\Theta\times\Delta(\Omega)
      \to\Delta(A\cup\{\mathrm{off}\})$ specifies the correction the human
      applies when she oversees, as a function of whatever belief
      $\beta\in\Delta(\Omega)$ she holds at the moment of correction. Its
      support lies in the optimal-correction set,
      \[
        \operatorname{supp}\bigl(\mathrm{Over}(s,a_\sigma,\theta,\beta)\bigr)
        \subseteq
        \arg\max_{e\in A\cup\{\mathrm{off}\}}
        \mathbb{E}_{\beta}[f_e(\theta,\omega)\mathbf{1}_{e\in A}],
      \]
      i.e.\ it places mass only on maximizers (allowing arbitrary
      randomized tie-breaking), with the off-switch special case
      restricting the $\arg\max$ to $\{a_\sigma,\mathrm{off}\}$. The
      relevant $\beta$ depends on the protocol, which will be specified
      later. (An exogenous $\mathrm{Over}$ is also admissible; all results
      below use this optimal-correction form.)
    \item \textbf{Context law.} $\rho\in\Delta(S)$, an i.i.d.\ context
      distribution: each round draws $s_t\stackrel{\text{i.i.d.}}{\sim}\rho$,
      independently of $(\theta,\omega)$ and of the public history. (A
      fixed publicly known context sequence, or an exogenous
      $\rho_t(\cdot\mid h_t^{\mathrm{pub}})$ with contexts independent of
      $(\theta,\omega)$ given the public history, are equally admissible;
      the i.i.d.\ case is assumed for concreteness and is all that the
      results below use.)
    \item $c_\mathrm{ask},c_\mathrm{ov}\ge0$ are interaction costs;
      $T\ge1$ is the horizon and $\gamma\in(0,1]$ the discount factor,
      with $\gamma=1$ permitted when $T<\infty$ and $\gamma<1$ required
      when $T=\infty$; $P_0\in\Delta(\Theta\times\Omega)$ is the joint
      prior over types.
  \end{itemize}
\end{definition}
%  The prior conditional $b^H_t(\theta)=\mu_t(\cdot\mid\theta)$ when the ask carries no information the human acts on (\Cref{prop:myopic}), or the post-ask posterior $b^H_{t,B}$ (the one-shot $b^H_B$ of \eqref{eq:postask} formed against $\mu_t$) when the ask is a credible signal (\Cref{prop:teamopt}). 
\paragraph{Information structure.} At round~$0$,
$(\theta,\omega)\sim P_0$ is drawn once and persistent. The human
observes $\theta$; the AI observes $\omega$; neither directly observes
the other's type. Both players observe the context $s_t$, the proposal
$a_{\sigma,t}$, the AI meta-action $a^{AI}_t$, the human meta-action
$a^H_t$ on the ask branch (it is unobserved when the AI plays), the
executed action $a^{\mathrm{exec}}_t$, and the realized observation $o_t$
at every round. The common belief
$\mu_t\in\Delta(\Theta\times\Omega)$ is computable from the public
history and is common knowledge. The private posteriors are the
conditionals
\[
  b^{AI}_t(\theta')=\mu_t(\theta'\mid\omega),
  \qquad
  b^H_t(\omega')=\mu_t(\omega'\mid\theta).
\]

\paragraph{Stage interaction (round~$t$).}
\begin{enumerate}
  \item Both observe $(s_t,\mu_t)$; the AI additionally observes
    $\omega$; the human additionally observes $\theta$.
  \item The proposed action $a_{\sigma,t}\sim\sigma(s_t)$ is drawn and
    publicly observed.
  \item The interface meta-actions
    $a^{AI}_t\in\{\mathrm{play},\mathrm{ask}\}$ and
    $a^H_t\in\{\mathrm{trust},\mathrm{oversee}\}$ are selected
    simultaneously, conditioning on $(s_t,\mu_t,a_{\sigma,t})$ and each
    player's private parameter. The AI meta-action $a^{AI}_t$ is then
    publicly revealed. The human's contingent meta-action $a^H_t$ is
    public only on the branch where the AI asks; if the AI plays, the
    human's trust/oversee choice is neither observed by the AI nor
    payoff-relevant (she cannot override a play), so it conveys no
    information about $\theta$. The human does not observe $a^{AI}_t$
    before selecting whether she is willing to oversee. If the realized
    branch is ask-oversee, the correction action is selected after this
    revelation; under a credible protocol it is therefore evaluated using
    the belief that an ask induces (see step~4). The realized ask is
    public and enters $\mu_{t+1}$, the cross-round signaling channel
    exploited in \Cref{sec:obs}.
  \item The executed action is
    \[
      a^{\mathrm{exec}}_t =
      \begin{cases}
        a_{\sigma,t}
          & \text{if } a^{AI}_t=\mathrm{play}, \\
        a_{\sigma,t}
          & \text{if } (a^{AI}_t,a^H_t)=(\mathrm{ask},\mathrm{trust}), \\
        e\sim\mathrm{Over}(s_t,a_{\sigma,t},\theta,\beta_t)
          & \text{if } (a^{AI}_t,a^H_t)=(\mathrm{ask},\mathrm{oversee}),
      \end{cases}
    \]
    with $a^{\mathrm{exec}}_t\in A\cup\{\mathrm{off}\}$ and the human
    cannot override when the AI plays. The correction belief is
    \[
      \beta_t=
      \begin{cases}
        b^H_t(\cdot)=\mu_t(\cdot\mid\theta),
          & \text{myopic non-signaling protocol},\\
        b^H_{t,B}(\cdot\mid\theta),
          & \text{credible ask protocol (ask-set }B),
      \end{cases}
    \]
    where $b^H_{t,B}(\omega\mid\theta):=\mu_t(\omega\mid\theta,\omega\in B)
    =\mu_t(\theta,\omega)\mathbf{1}\{\omega\in B\}\big/
    \sum_{\omega'\in B}\mu_t(\theta,\omega')$ is the post-ask posterior
    formed against the current common belief $\mu_t$ (the one-shot
    analogue $b^H_B$ of \eqref{eq:postask} is its $\mu_t\equiv\mu$
    specialization). That is, the human corrects on her prior conditional
    when the ask is not a credible signal, and on the post-ask posterior
    when it is.
  \item If $a^{\mathrm{exec}}_t\in A$, the observation
    $o_t\sim O_\omega(\cdot\mid s_t,a^{\mathrm{exec}}_t)$ is drawn and
    publicly observed. If $a^{\mathrm{exec}}_t=\mathrm{off}$, no
    observation is drawn for this round only, not permanently.
  \item Both players receive the shared stage reward, defined piecewise so
    that no outcome is referenced on the shutdown branch:
    \[
      r_t
      \;:=\;
      \begin{cases}
        R_\theta(o_t)
          - c_\mathrm{ask}\mathbf{1}\{a^{AI}_t=\mathrm{ask}\}
          - c_\mathrm{ov}\mathbf{1}\{a^{AI}_t=\mathrm{ask},\,a^H_t=\mathrm{oversee}\},
          & a^{\mathrm{exec}}_t\in A,\\[1mm]
        {}- c_\mathrm{ask}\mathbf{1}\{a^{AI}_t=\mathrm{ask}\}
          - c_\mathrm{ov}\mathbf{1}\{a^{AI}_t=\mathrm{ask},\,a^H_t=\mathrm{oversee}\},
          & a^{\mathrm{exec}}_t=\mathrm{off}.
      \end{cases}
    \]
    (Equivalently, adjoin a dummy outcome $\bot$ with $R_\theta(\bot)=0$
    drawn deterministically on the shutdown branch.) The oversight cost is
    charged only when oversight is actually invoked, i.e.\ when the AI asks
    \emph{and} the human oversees; the human's choice of \texttt{oversee}
    carries no cost when the AI plays (and indeed cannot bind, since the
    human cannot override a play). The shared reward $r_t$ is the team
    objective, not an additional public observation: the AI observes the
    operational outcome $o_t$ but not the numerical value $R_\theta(o_t)$
    or $r_t$, so a play round does not let
    the AI read $\theta$ off the realized reward.
  \item Both players update $\mu_t$ to $\mu_{t+1}$ by Bayes' rule against
    everything publicly revealed this round, whose joint likelihood under
    a candidate type pair $(\theta',\omega')$ factors through the AI
    meta-action $a^{AI}_t$, the human meta-action $a^H_t$ \emph{on the ask
    branch only} (it is unobserved when the AI plays), the realized
    oversight correction (when ask-oversee), the executed action
    $a^{\mathrm{exec}}_t$, and the operational observation
    $O_{\omega'}(o_t\mid s_t,a^{\mathrm{exec}}_t)$. The observation factor
    is the primary channel through which both learn about $\omega$; the
    realized override, available only after an ask, is the primary channel
    through which the AI learns about $\theta$.
\end{enumerate}

\paragraph{Policies and value.} In the finite-horizon model the
meta-policies may depend on time,
\[
  \pi^{AI}_t:S\times\Delta(\Theta\times\Omega)\times\Omega\times A
  \;\to\;\Delta(\{\mathrm{play},\mathrm{ask}\}),
  \qquad
  \pi^H_t:S\times\Delta(\Theta\times\Omega)\times\Theta\times A
  \;\to\;\Delta(\{\mathrm{trust},\mathrm{oversee}\}),
\]
and reduce to stationary maps $\pi^{AI},\pi^H$ in the stationary
infinite-horizon case. The conditional value of a type pair is
$V^\pi(\theta,\omega)=\mathbb{E}^\pi\bigl[\sum_{t=1}^T\gamma^{t-1}r_t\mid\theta,\omega\bigr]$,
and the team objective is the single \emph{ex ante} scalar
\[
  V^\pi(\mu_0)
  \;:=\;
  \mathbb{E}^\pi_{\mu_0}\Bigl[\textstyle\sum_{t=1}^T\gamma^{t-1}r_t\Bigr]
  \;=\;
  \sum_{\theta,\omega}\mu_0(\theta,\omega)\,V^\pi(\theta,\omega),
\]
which the coordinator maximizes; the one-shot benchmark of
\Cref{prop:teamopt} is exactly this $\mu$-weighted ex ante optimization.
Because the interface meta-actions and the executed action both affect the
public posterior, the multi-round problem does \emph{not} separate across
rounds. Removing Markov state transitions takes the physical state out of
the endogenous state variable, but the common belief remains dynamically
coupled across time: the coordinator's sufficient state is
$(t,s_t,\mu_t)$ in the finite-horizon model, or $(s_t,\mu_t)$ in the
stationary infinite-horizon model. The one round in which separation does
hold trivially is the one-shot game $T=1$, which is the object of
\Cref{sec:oneshot}; the coordinator there chooses
$\delta^{AI}:\Omega\to\{\mathrm{play},\mathrm{ask}\}$ and
$\delta^H:\Theta\to\{\mathrm{trust},\mathrm{oversee}\}$ against the fixed
common belief $\mu$.

\paragraph{Relation to predecessor models.} Setting $|\Omega|=1$,
$c_\mathrm{ask}=c_\mathrm{ov}=0$, and fixing both meta-policies to
$(\mathrm{ask},\mathrm{oversee})$ yields a restricted contextual-bandit
assistance-game analogue of CIRL, with the bandit structure replacing the
Markov state; it is not full CIRL, since the AI cannot freely choose
ordinary environment actions and the human acts only through the oversight
mechanism. Setting $T=1$, $|\Theta|=1$, and the off-switch operator
$\mathrm{Over}\in\{a_\sigma,\mathrm{off}\}$ recovers the one-shot
Off-Switch Game~\cite{hadfield2017offswitch}. Setting
$|\Theta|=|\Omega|=1$ yields a stateless shared-reward specialization of
the Oversight Game interface~\cite{overman2026oversight}, which is itself a
Markov game and may carry distinct player rewards; our setting drops its
dynamics and specializes to the shared-reward, two-private-type case.

\section{Results}\label{sec:results}

\subsection{Setup: bilinear payoff and dominated actions}\label{sec:setup}

Fix a round and drop time subscripts. Fix context $s$ and proposal
$a_\sigma$. For any executed action $a\in A$, define the action payoff
\begin{equation}\label{eq:bilinear}
  f_a(\theta,\omega)
  \;:=\;
  \mathbb{E}_{o\sim O_\omega(\cdot\mid s,a)}\bigl[R_\theta(o)\bigr]
  \;=\;
  \bigl\langle O_\omega(\cdot\mid s,a),\;R_\theta(\cdot)\bigr\rangle_{\mathcal{O}},
  \qquad
  f_\mathrm{off}(\theta,\omega):=0,
\end{equation}
with the convention that shutdown yields $0$. We write
$f_\sigma(\theta,\omega):=f_{a_\sigma}(\theta,\omega)$ for the proposal payoff; when no executed action is named, $f$ means
$f_\sigma$. The inner-product factorization separates the AI's private
information ($O_\omega$) from the human's ($R_\theta$). In \Cref{ex:robot},
$f_\sigma(\theta,\omega_H)=-840$ and $f_\sigma(\theta,\omega_L)=+430$.

Define the human-side estimate (her expected proposal payoff,
averaged over her belief about $\omega$):
\[
  \bar f_H(\theta)
  \;:=\;
  \mathbb{E}_{\omega'\sim\mu(\cdot\mid\theta)}\bigl[f_\sigma(\theta,\omega')\bigr]
  \;=\;
  \bigl\langle\bar O_{b^H(\theta)},R_\theta\bigr\rangle,
  \qquad
  \bar O_{b^H(\theta)}
  :=\textstyle\sum_{\omega'}\mu(\omega'\mid\theta)\,O_{\omega'}(\cdot\mid s,a_\sigma).
\]
For a fixed executed correction $e$, define the override gain
relative to playing the proposal,
\[
  D_e(\theta,\omega)
  \;:=\;
  f_e(\theta,\omega)\,\mathbf{1}_{e\in A}-f_\sigma(\theta,\omega).
\]
The correction the human actually applies depends on the belief she
holds when she oversees, which in turn depends on what the ask reveals.
For an AI ask-set $\emptyset\ne B\subseteq\Omega$, define the post-ask posterior (the belief that ``the AI asked'' induces,
under a commonly understood protocol in which asks occur exactly on $B$):
\begin{equation}\label{eq:postask}
  b^H_B(\omega\mid\theta)
  \;:=\;
  \mu(\omega\mid\theta,\,\omega\in B)
  \;=\;
  \frac{\mu(\theta,\omega)\,\mathbf{1}\{\omega\in B\}}
       {\sum_{\omega'\in B}\mu(\theta,\omega')},
\end{equation}
with the convention that if the denominator is zero (such $\theta$ having
zero probability conditional on an ask) then $b^H_B(\cdot\mid\theta)$ is
arbitrary; this does not affect $\Delta(B,C)$. For $B=\emptyset$ no
correction belief is needed. Let $e^*_B(\theta)\in\arg\max_{e}
\mathbb{E}_{b^H_B(\cdot\mid\theta)}[f_e(\theta,\omega)\mathbf{1}_{e\in A}]$
be the human's optimal correction at that posterior, with induced gain
$D_B(\theta,\omega):=D_{e^*_B(\theta)}(\theta,\omega)$. We use $b^H_B$ in
the team-optimal benchmark (where the ask is credible) and the prior
conditional $b^H(\theta)$ in the myopic rule (where it is not). Under the
off-switch operator, $e^*_B(\theta)=\mathrm{off}$ iff
$\mathbb{E}_{b^H_B(\cdot\mid\theta)}[f_\sigma(\theta,\omega)]<0$, in which
case $D_B(\theta,\omega)=-f_\sigma(\theta,\omega)$ on the cells
$\omega\in B$; otherwise $D_B=0$ there.

\begin{remark}[Ask-trust dominance (one-shot)]\label{lem:dom}
  In a one-shot payoff comparison with no continuation value of
  information, ask-trust has no intrinsic benefit, since
  $f_\sigma(\theta,\omega)-c_\mathrm{ask}\le f_\sigma(\theta,\omega)$. It
  is never chosen for its own sake; it appears only as an unavoidable
  cost on cells $(\theta,\omega)$ with $\omega\in B$ and $\theta\notin C$,
  i.e.\ when asking at type $\omega$ is worthwhile for other human
  types but this particular $\theta$ does not oversee. (This $-c_\mathrm{ask}$
  is exactly the term carried by $\Delta(B,C)$ in \eqref{eq:DeltaBC}.) In
  the multi-round setting this changes. Ask-trust can be valuable purely
  as a signal, paying $c_\mathrm{ask}$ now to move the human's
  belief and enable future oversight (\Cref{prop:signaling}).
\end{remark}

\subsection{One-shot analysis: team-optimal benchmark vs.\ myopic
  oversight}\label{sec:oneshot}

We give two one-shot characterizations. \Cref{prop:teamopt} is the
genuine team optimum (the coordinator chooses both players' rules
jointly).\footnote{We adopt a single global tie-breaking rule throughout:
at indifference, the human chooses \texttt{trust} and the AI chooses
\texttt{play}. This resolves the boundary cases of
\Cref{prop:teamopt,prop:myopic} and fixes the strict crossing in
\Cref{prop:passive}.} \Cref{prop:myopic} is the weaker, behaviorally
natural rule in which the human treats the AI's ask as uninformative. The
two differ in an instructive way (\Cref{rem:insight}).

\paragraph{Team-optimal one-shot benchmark.} The coordinator chooses
both rules jointly; equivalently, the team commits to a commonly
understood protocol, so when the AI asks, the human updates to the
post-ask posterior $b^H_B$ of \eqref{eq:postask} and applies $e^*_B$. In
the gain \eqref{eq:DeltaBC} below, the ask cost is paid on every
$\omega\in B$ (for all $\theta$), while the oversight cost and override
gain accrue only on the joint event $\{\omega\in B,\theta\in C\}$.

Although the model permits randomized meta-policies, the one-shot team
optimum is attained by a deterministic one. Let $x_\omega\in[0,1]$ be the
probability that type $\omega$ asks, and write $x=(x_\omega)_{\omega}$.
Optimizing the human's trust/oversee choice and her correction action for
each $\theta$, the team's relative gain over always-playing can be written
\[
  G(x)
  = -c_\mathrm{ask}\sum_{\omega}\mu_\Omega(\omega)\,x_\omega
  + \sum_{\theta}\max\Bigl\{0,\;\max_{e}\sum_{\omega}\mu(\theta,\omega)\,x_\omega
    \bigl[f_e(\theta,\omega)\mathbf{1}_{e\in A}-f_\sigma(\theta,\omega)-c_\mathrm{ov}\bigr]\Bigr\}.
\]
The first term is linear in $x$ and each summand of the second is a
pointwise maximum of finitely many linear functions of $x$, hence convex;
$G$ is therefore convex on the hypercube $[0,1]^{|\Omega|}$. A convex
function on a polytope attains its maximum at an extreme point, so there is
an optimal $x$ with every $x_\omega\in\{0,1\}$. It is therefore without
loss of optimality to restrict attention to deterministic type-contingent
meta-policies, represented by subsets $B\subseteq\Omega$ (the AI types that
ask) and $C\subseteq\Theta$ (the human types that oversee).

\begin{proposition}[Team-optimal one-shot benchmark under credible ask protocol]\label{prop:teamopt}
  In the one-shot game ($T=1$) under a credible ask protocol, a
  deterministic coordinator policy is a pair $(B,C)$, where
  $B\subseteq\Omega$ is the set of AI types that ask and
  $C\subseteq\Theta$ is the set of human types that oversee. Relative to
  always playing, its gain is
  \begin{equation}\label{eq:DeltaBC}
    \Delta(B,C)
    = -c_\mathrm{ask}\!\sum_{\omega\in B}\mu_\Omega(\omega)
    + \sum_{\omega\in B}\sum_{\theta\in C}\mu(\theta,\omega)
    \bigl[D_B(\theta,\omega)-c_\mathrm{ov}\bigr],
  \end{equation}
  where $\mu_\Omega(\omega)=\sum_\theta\mu(\theta,\omega)$ and $D_B$ is
  evaluated at the post-ask posterior \eqref{eq:postask}. A team-optimal
  one-shot policy is any
  $(B^*,C^*)\in\arg\max_{B\subseteq\Omega,\,C\subseteq\Theta}\Delta(B,C)$,
  with value
  $V^{\mathrm{TO}}=\mathbb{E}_\mu[f_\sigma(\theta,\omega)]+\Delta(B^*,C^*)$.
\end{proposition}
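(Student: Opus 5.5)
The plan is a direct cell-by-cell accounting of the one-shot payoff, followed by an appeal to the extreme-point argument given just before the statement. First, fix a deterministic coordinator policy $(B,C)$ with $T=1$. Under this policy the AI of type $\omega$ asks iff $\omega\in B$, and the human of type $\theta$ oversees iff $\theta\in C$.

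Under the credible ask protocol, the correction belief on the ask branch is $\beta=b^H_B(\cdot\mid\theta)$ from \eqref{eq:postask}. Hence the executed correction is $e^*_B(\theta)$. Any randomized tie-breaking in $\mathrm{Over}$ is over maximizers of the same expectation, so I will check that it leaves the relevant conditional payoffs unchanged. If there is an issue, I would fix a deterministic selection $e^*_B(\theta)$, as the statement's definition of $D_B$ already does.

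Next, I enumerate the stage reward $r$ on each type cell $(\theta,\omega)$, using the piecewise definition in step 6 and the bilinear payoff \eqref{eq:bilinear}. There are three cases.
\begin{itemize}
  \item If $\omega\notin B$, the AI plays and the expected reward is $f_\sigma(\theta,\omega)$. The human's contingent choice cannot bind and incurs no cost.
  \item If $\omega\in B$ and $\theta\notin C$ (ask-trust), the expected reward is $f_\sigma(\theta,\omega)-c_\mathrm{ask}$.
  \item If $\omega\in B$ and $\theta\in C$ (ask-oversee), the expected reward is $f_{e^*_B(\theta)}(\theta,\omega)\mathbf{1}_{e\in A}-c_\mathrm{ask}-c_\mathrm{ov}$. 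This equals $f_\sigma(\theta,\omega)+D_B(\theta,\omega)-c_\mathrm{ask}-c_\mathrm{ov}$, and on the shutdown branch it uses $f_\mathrm{off}=0$.
\end{itemize}

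I then subtract the always-play value $f_\sigma(\theta,\omega)$ cellwise and take the $\mu$-expectation. The $-c_\mathrm{ask}$ terms collect over all $\theta$ for each $\omega\in B$, giving $-c_\mathrm{ask}\sum_{\omega\in B}\mu_\Omega(\omega)$. The remaining terms live only on $B\times C$ and give the double sum in \eqref{eq:DeltaBC}. This establishes the formula for $\Delta(B,C)$.

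For the team-optimality claim, I invoke the preceding convexity argument for $G(x)$. For a general (possibly randomized) meta-policy, the ex ante gain is at most $\max_{x\in[0,1]^{|\Omega|}}G(x)$, because $G$ already optimizes the human's response. That maximum is attained at a vertex $x=\mathbf{1}_B$. At a vertex, the inner maximizer over $e$ for each $\theta$ maximizes $\sum_{\omega\in B}\mu(\theta,\omega)f_e(\theta,\omega)\mathbf{1}_{e\in A}$. Dividing by the normalizing mass, this is exactly $e^*_B(\theta)$ at the posterior $b^H_B$. The positive part $\max\{0,\cdot\}$ is realized by choosing $C=\{\theta:\text{bracket}>0\}$.

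Thus $G(\mathbf{1}_B)=\max_C\Delta(B,C)$, and so $\max_{B,C}\Delta(B,C)$ equals the team optimum. Adding back $\mathbb{E}_\mu[f_\sigma]$ gives $V^{\mathrm{TO}}$.

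The main obstacle is the identification at a vertex. I must check that the human's correction under the credible protocol, which is computed from her own posterior $b^H_B$, coincides with the coordinator's cellwise-optimal correction in $G$. The point is that maximizing $\sum_{\omega\in B}\mu(\theta,\omega)[\cdots]$ differs from maximizing $\mathbb{E}_{b^H_B(\cdot\mid\theta)}[\cdots]$ only by a positive scalar, which also handles the $c_\mathrm{ov}$ constant.

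The zero-denominator convention also needs care. If $\sum_{\omega\in B}\mu(\theta,\omega)=0$, the $\theta$-terms vanish, so an arbitrary $b^H_B$ does not affect $\Delta$. Finally, the tie-breaking footnote resolves indifferences without changing values.
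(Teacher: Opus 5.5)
Your proof is correct and follows the paper's route. The cell-by-cell accounting over $\{\omega\notin B\}$, $\{\omega\in B,\theta\notin C\}$ and $\{\omega\in B,\theta\in C\}$ is exactly the paper's proof. Your vertex identification $G(\mathbf{1}_B)=\max_C\Delta(B,C)$, via the positive-rescaling observation, correctly spells out the link to the randomized optimum that the paper leaves implicit in the convexity argument preceding the statement.

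One small sharpening concerns tie-breaking in $\mathrm{Over}$. It can change the cellwise $D_e(\theta,\omega)$, but it cannot change the aggregate $\sum_{\omega\in B}\mu(\theta,\omega)D_e(\theta,\omega)$. That aggregate is all that $\Delta$ depends on, so your fallback to a deterministic selection is unnecessary.
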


\begin{proof}
  See \Cref{sec:proof-teamopt}.
\end{proof}

\noindent The optimizer is generally not the separable myopic
rule: the coupling runs through $b^H_B$, since the human's correction
depends on which $\omega$ trigger an ask. We specialize to the binary
off-switch case (the accurate model of a non-technical overseer, cf.\
\Cref{ex:robot}: accept $a_\sigma$ or reject to \texttt{off}, with no
technical correction; \texttt{off} is per-round, not decommissioning).

\begin{assumption}[Binary sign pattern]\label{ass:sign}
  $|\Theta|=|\Omega|=2$, $\Theta=\{\theta_0,\theta_1\}$,
  $\Omega=\{\omega_L,\omega_H\}$. Write
  $f_{ij}:=f_\sigma(\theta_i,\omega_j)$. Assume $f_{1L}>0>f_{1H}$,
  $f_{0j}\ge0$ for $j\in\{L,H\}$, and $0<c_\mathrm{ask}<-f_{1H}-c_\mathrm{ov}$.
  The common belief is a product measure with marginals
  $b:=\mu(\theta_1)\in(0,1)$ and $q:=\mu(\omega_H)\in(0,1)$.
\end{assumption}

\Cref{ass:sign} restricts attention to the minimal nontrivial case. 
The type spaces are binary. $\theta_1$ is the human type with skin in 
the game ($f_{1L}>0>f_{1H}$: the proposal is beneficial under 
$\omega_L$ and harmful under $\omega_H$), while $\theta_0$ is a type that
values the proposal nonnegatively under both $\omega_L$ and $\omega_H$
($f_{0j}\ge 0$), so that shutdown is never strictly preferred to the
proposal for $\theta_0$. The single cost condition
$0<c_{\mathrm{ask}}<-f_{1H}-c_{\mathrm{ov}}$ (which already forces
$-f_{1H}>c_{\mathrm{ov}}$, since $c_{\mathrm{ask}}>0$) ensures that avoided
harm net of oversight cost exceeds the ask cost, and by itself guarantees
$b^*\in(0,1)$; the multi-round threshold $b^{**}=b^*/\gamma$ is guaranteed
to lie in $(0,1)$ only together with \Cref{ass:harm} below, which is
exactly the condition $b^{**}<1$. The product-measure belief parameterizes
the \emph{common} prior (equivalently the robot's prior before observing
any human behavior) via two scalars: $b=\mu(\theta_1)$, the prior
probability the human is type $\theta_1$, and $q=\mu(\omega_H)$, the prior
probability the proposal is harmful. The human herself knows her realized
$\theta$; $b$ is the robot's uncertainty about it, not the human's. In
\Cref{ex:robot}, $f_{1H}=-840$, $f_{1L}=+430$, $q=0.30$. The one-shot
prior is taken in the interior $q\in(0,1)$; the calculation in
\Cref{cor:teamopt} extends directly to the boundary case $q=1$, which is
the post-ask belief invoked in the multi-round analysis of
\Cref{sec:obs}.

\begin{corollary}[Binary off-switch threshold]\label{cor:teamopt}
  Under \Cref{ass:sign} and the off-switch operator, define
  $b^*:=\dfrac{c_\mathrm{ask}}{-f_{1H}-c_\mathrm{ov}}$. A canonical
  team-optimal policy is
  \[
    B^*=\{\omega_H\},\quad C^*=\{\theta_1\}
    \quad\text{iff}\quad b>b^*,
  \]
  and always playing is optimal iff $b<b^*$; at $b=b^*$ both are optimal.
  The team-optimal value is
  \[
    V^{\mathrm{TO}}
    = \mathbb{E}_\mu[f_\sigma(\theta,\omega)]
    + q\,\bigl[\,b(-f_{1H}-c_\mathrm{ov})-c_\mathrm{ask}\,\bigr]_+ .
  \]
  Thus the team-optimal ask threshold depends on $b$ but not on $q$.
\end{corollary}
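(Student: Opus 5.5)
The plan is to start from \Cref{prop:teamopt} and evaluate $\Delta(B,C)$ by hand on all $4\times 4=16$ pairs $(B,C)$, pruning most of them with two observations. First, under the off-switch operator, $D_B(\theta_0,\cdot)=0$ for every $B$. Indeed, $f_{0j}\ge0$ means that under any posterior $b^H_B(\cdot\mid\theta_0)$ the expected proposal payoff is nonnegative. The global tie-breaking rule then selects $a_\sigma$ rather than \texttt{off}, so that $D_B=0$ on those cells. Including $\theta_0$ in $C$ therefore only adds $-c_\mathrm{ov}\mu(\theta_0,\omega)\le0$ for each $\omega\in B$. Hence, for every $B$, we may take $C\subseteq\{\theta_1\}$; when $c_\mathrm{ov}=0$ the choice is a tie, and we resolve it toward trust. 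Second, $B=\emptyset$ gives $\Delta=0$, which is the always-play benchmark.

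Next I would compute $\Delta(B,\{\theta_1\})$ for the three nonempty $B$, using the product structure $\mu(\theta,\omega)=\mu(\theta)\mu(\omega)$.

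For $B=\{\omega_H\}$, the post-ask posterior is the point mass at $\omega_H$. Since $f_{1H}<0$, type $\theta_1$ switches off, so $D_B(\theta_1,\omega_H)=-f_{1H}$. This gives
\[
\Delta=-c_\mathrm{ask}q+bq(-f_{1H}-c_\mathrm{ov})=q\bigl[b(-f_{1H}-c_\mathrm{ov})-c_\mathrm{ask}\bigr].
\]

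For $B=\{\omega_L\}$, the posterior is the point mass at $\omega_L$. Since $f_{1L}>0$, there is no shutdown and $D_B=0$, so $\Delta=-c_\mathrm{ask}(1-q)-b(1-q)c_\mathrm{ov}<0$.

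For $B=\Omega$, the posterior equals the prior $q$, by the product measure. There are two sub-cases. If $(1-q)f_{1L}+qf_{1H}\ge0$, then $\theta_1$ trusts, $D_B=0$, and $\Delta=-c_\mathrm{ask}-bc_\mathrm{ov}<0$. Otherwise \texttt{off} is applied in both cells, which yields
\[
\Delta=-c_\mathrm{ask}+b\bigl[-(1-q)f_{1L}-qf_{1H}\bigr]-bc_\mathrm{ov}.
\]
This is strictly below the $\{\omega_H\}$ value, because the difference equals $-(1-q)\bigl[c_\mathrm{ask}+b(f_{1L}+c_\mathrm{ov})\bigr]<0$, using $f_{1L}>0$ and $c_\mathrm{ask}>0$.

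The comparison therefore reduces to $\max\{0,\;q[b(-f_{1H}-c_\mathrm{ov})-c_\mathrm{ask}]\}$. Since $q>0$, the bracket is positive iff $b>b^*$, negative iff $b<b^*$, and zero at $b=b^*$, where the two policies tie. The assumption $0<c_\mathrm{ask}<-f_{1H}-c_\mathrm{ov}$ guarantees $b^*\in(0,1)$, so both regimes are nonvacuous. The value formula then follows from $V^{\mathrm{TO}}=\mathbb{E}_\mu f_\sigma+\Delta(B^*,C^*)$ in \Cref{prop:teamopt}. The $q$-independence of the threshold is immediate, because $q$ factors out as a positive multiplier.

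The main obstacle is getting the ``iff'' right and handling the word ``canonical''. Strictly, $(\{\omega_H\},\{\theta_1\})$ must be shown to be the unique maximizer when $b>b^*$ under the stated tie-breaking. This needs care when $c_\mathrm{ov}=0$, because then $(\{\omega_H\},\Theta)$ ties with it. I would state that adding $\theta_0$ is weakly dominated and excluded by the trust tie-break, which is what makes the policy canonical. For the $B=\Omega$ shutdown sub-case, the strict inequality $f_{1L}>0$ is what rules out a tie. Finally, the extension to the boundary $q=1$ mentioned in the paper follows from the same $\{\omega_H\}$ computation.
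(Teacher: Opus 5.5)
Your proof is correct and follows the paper's argument essentially step for step. You discard $\theta_0$ from $C$ since $D_B(\theta_0,\cdot)=0$, dismiss $B=\{\omega_L\}$, split $B=\Omega$ into the keep and shutdown sub-cases (with the same gap $-(1-q)[c_\mathrm{ask}+b(f_{1L}+c_\mathrm{ov})]$ relative to $\{\omega_H\}$), and reduce to the sign of $q[b(-f_{1H}-c_\mathrm{ov})-c_\mathrm{ask}]$. One small precision: $D_B(\theta_0,\cdot)=0$ does not need the global tie-break, which concerns the trust/play meta-actions. With an interior product prior, a zero posterior expectation forces $f_{0j}=0$ on every $\omega_j\in B$, so either maximizer gives zero gain.
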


\begin{proof}
  See \Cref{sec:proof-teamopt}.
\end{proof}

\noindent The non-uniqueness of $C^*$ is payoff-irrelevant when
$B^*=\emptyset$, and when $c_\mathrm{ov}=0$ adding types whose correction
keeps $a_\sigma$ does not change payoffs. The threshold is $q$-free
because asking on $B^*=\{\omega_H\}$ reveals $\omega_H$, so $\theta_1$
shuts down on the post-ask posterior regardless of the prior $q$; $q$
only scales the value (rarer harm, less total benefit).

\paragraph{Myopic (non-signaling) one-shot policy.} Now suppose the
human does not treat the AI's ask as evidence about $\omega$, because
the protocol is not commonly known, the ask is not a credible signal, or
the interface does not surface it as one. She then evaluates oversight
against her prior conditional $b^H(\theta)=\mu(\cdot\mid\theta)$,
overseeing iff $\bar f_H(\theta)+c_\mathrm{ov}<\max_e
\mathbb{E}_{b^H}[f_e\mathbf{1}_{e\in A}]$; under off-switch this is
$\theta\in\Theta_-:=\{\theta:\bar f_H(\theta)<-c_\mathrm{ov}\}$, and for
such $\theta$ her chosen correction is shutdown, with per-cell gain
$-f_\sigma(\theta,\omega)$ over playing. Taking this human rule as fixed,
the AI asks iff doing so raises the team payoff, i.e.\ iff
\[
  \Psi(\omega):=
  \sum_{\theta\in\Theta_-}\mu(\theta\mid\omega)\bigl[-f_\sigma(\theta,\omega)\bigr]
  -c_\mathrm{ask}-c_\mathrm{ov}\,\mu^-(\omega)>0,
  \qquad
  \mu^-(\omega)=\!\!\sum_{\theta\in\Theta_-}\!\!\mu(\theta\mid\omega).
\]
The contrast with the team optimum is sharp: here the human shuts down
on her prior belief (a fixed set $\Theta_-$), whereas in
\Cref{cor:teamopt} she shuts down on the post-ask posterior.

\begin{proposition}[Myopic one-shot characterization]\label{prop:myopic}
  Under the myopic human rule above, the policy is
  $\delta^H(\theta)=\mathrm{oversee}$ iff $\theta\in\Theta_-$ and
  $\delta^{AI}(\omega)=\mathrm{ask}$ iff $\Psi(\omega)>0$. Under
  \Cref{ass:sign} and off-switch, with
  $q^*:=\dfrac{f_{1L}+c_\mathrm{ov}}{f_{1L}-f_{1H}}\in(0,1)$ and $b^*$ as
  above,
  \begin{enumerate}
    \item[\emph{(i)}] $\theta_1\in\Theta_-$ iff $q>q^*$;
      $\theta_0\notin\Theta_-$ always;
    \item[\emph{(ii)}] if $q>q^*$, then $\Psi(\omega_H)>0$ iff $b>b^*$ and
      $\Psi(\omega_L)<0$; if $q\le q^*$, then
      $\Theta_-=\emptyset$ and $\Psi(\omega_H)=\Psi(\omega_L)=-c_\mathrm{ask}<0$
      for every $b$;
    \item[\emph{(iii)}] the ask region is the rectangle
      $(b^*,1)\times(q^*,1)$.
  \end{enumerate}
\end{proposition}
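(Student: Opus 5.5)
The first claim, the form of $\delta^H$ and $\delta^{AI}$, follows from the definitions and needs no new argument. The myopic human oversees exactly when her prior-conditional correction beats playing net of $c_\mathrm{ov}$. Under off-switch this is $\bar f_H(\theta)<-c_\mathrm{ov}$, i.e.\ $\theta\in\Theta_-$, with ties resolved to \texttt{trust} by the global tie-breaking rule. Given this fixed human rule, the AI's contribution from asking at type $\omega$ separates across $\omega$. Conditional on $\omega$, asking gains $-f_\sigma(\theta,\omega)$ on each $\theta\in\Theta_-$ (shutdown replaces the proposal), pays $c_\mathrm{ov}$ with conditional probability $\mu^-(\omega)$, and pays $c_\mathrm{ask}$ always. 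That total is exactly $\Psi(\omega)$, so the AI asks iff $\Psi(\omega)>0$, again with ties going to \texttt{play}.

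For (i), I use the product-measure part of \Cref{ass:sign}, so that $\mu(\cdot\mid\theta)$ is the marginal $(1-q,q)$ for every $\theta$. Then $\bar f_H(\theta_1)=(1-q)f_{1L}+qf_{1H}$. The inequality $\bar f_H(\theta_1)<-c_\mathrm{ov}$ rearranges to $q(f_{1L}-f_{1H})>f_{1L}+c_\mathrm{ov}$, i.e.\ $q>q^*$, using $f_{1L}-f_{1H}>0$. For $\theta_0$, we have $\bar f_H(\theta_0)\ge0\ge-c_\mathrm{ov}$, so strict inequality fails and $\theta_0\notin\Theta_-$. To check $q^*\in(0,1)$: positivity is clear. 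For $q^*<1$ we need $c_\mathrm{ov}<-f_{1H}$, which the cost condition forces since $c_\mathrm{ask}>0$.

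For (ii), again by the product measure, $\mu(\theta_1\mid\omega)=b$ for each $\omega$. If $q>q^*$, then $\Theta_-=\{\theta_1\}$ and $\Psi(\omega)=b(-f_{1\omega}-c_\mathrm{ov})-c_\mathrm{ask}$. At $\omega_H$ this is positive iff $b>c_\mathrm{ask}/(-f_{1H}-c_\mathrm{ov})=b^*$, where the denominator is positive. At $\omega_L$ it equals $-b(f_{1L}+c_\mathrm{ov})-c_\mathrm{ask}<0$. If $q\le q^*$, then $\Theta_-=\emptyset$ by (i), so $\mu^-\equiv0$, the sum vanishes, and $\Psi\equiv-c_\mathrm{ask}<0$ because $c_\mathrm{ask}>0$. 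Note that at $q=q^*$ the tie-breaking rule gives \texttt{trust}, which is why the boundary is excluded.

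Part (iii) is the union of the cases in (ii). An ask occurs at some type iff $q>q^*$ and $b>b^*$. Intersecting with $b,q\in(0,1)$ gives the rectangle $(b^*,1)\times(q^*,1)$, and the boundary points are excluded by strict inequalities and tie-breaking. There is no real obstacle here. The only points needing care are that product measure makes both the human's conditional belief and the AI's conditional $\mu(\theta\mid\omega)$ independent of the other type, and the consistent handling of ties at $q=q^*$ and $b=b^*$.
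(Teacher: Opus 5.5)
Your proof is correct and follows the paper's argument essentially step for step: you use the same cell-by-cell derivation of $\Psi(\omega)$ from the fixed prior-based human rule, the same product-measure reduction $\bar f_H(\theta_1)=(1-q)f_{1L}+qf_{1H}$ with $\mu(\theta_1\mid\omega)=b$, and the same case split on $q$ versus $q^*$. Two details are slightly more careful than the paper. For $\theta_0$ you rely on $\bar f_H(\theta_0)\ge0\ge-c_\mathrm{ov}$ together with the strict inequality defining $\Theta_-$, whereas the paper writes $0>-c_\mathrm{ov}$, which fails when $c_\mathrm{ov}=0$. You also verify $q^*<1$ explicitly from the cost condition, which the paper uses only implicitly.
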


\begin{proof}
  See \Cref{sec:proof-myopic}.
\end{proof}

\begin{figure}[t]
  \centering
  \begin{tikzpicture}[scale=1.25]
    % axis limits
    \def\xmax{6.2}
    \def\ymax{4.6}
    \def\bstar{2.7}
    \def\qstar{2.5}

    % region: everything left of b* (no ask), light gray
    \fill[black!8] (0,0) rectangle (\bstar,\ymax);
    % region: myopic asks here (top-right), orange
    \fill[orange!45] (\bstar,\qstar) rectangle (\xmax,\ymax);
    % region: the slab (bottom-right), dark red
    \fill[red!70!black] (\bstar,0) rectangle (\xmax,\qstar);

    % threshold guide lines
    \draw[red!60!black, dashed, thick] (\bstar,0) -- (\bstar,\ymax);
    \draw[green!45!black, dashed, thick] (0,\qstar) -- (\xmax,\qstar);

    % axes
    \draw[->, thick] (0,0) -- (\xmax+0.4,0) node[right] {$b=\mu(\theta_1)$};
    \draw[->, thick] (0,0) -- (0,\ymax+0.4) node[above] {$q=\mu(\omega_H)$};

    % axis ticks / labels
    \node[below] at (0,0) {$0$};
    \node[below] at (\bstar,0) {$b^*$};
    \node[left] at (0,\qstar) {$q^*$};

    % region labels
    \node[align=center, font=\bfseries] at ({(\bstar+\xmax)/2},{(\qstar+\ymax)/2+0.15})
      {MYOPIC};
    \node[align=center, font=\small] at ({(\bstar+\xmax)/2},{(\qstar+\ymax)/2-0.35})
      {asks here};
    \node[align=center, font=\bfseries, text=white] at ({(\bstar+\xmax)/2},{\qstar/2+0.2})
      {THE SLAB};
    \node[align=center, font=\small, text=white] at ({(\bstar+\xmax)/2},{\qstar/2-0.3})
      {avoidable harm};
  \end{tikzpicture}
  \caption{The team optimum asks on the half-strip $\{b>b^*\}$; the
  myopic rule asks only on the rectangle. The gap is the slab
  $\{b>b^*,\,q\le q^*\}$: the AI knows the action is harmful and shutdown
  would help, yet the myopic human trusts her prior and the harm is
  realized. Exactly the operator case $q=0.30<q^*\approx0.34$ (with
  $c_\mathrm{ov}=0$) of \Cref{ex:robot}, where she sits just inside the
  slab.}
  \label{fig:slab}
\end{figure}

\begin{remark}[Main insight: the cost of non-credible oversight]\label{rem:insight}
  Compare the two ask regions in $(b,q)$ space (\Cref{fig:slab}). The
  team optimum (\Cref{cor:teamopt}) asks on the half-strip $\{b>b^*\}$;
  the myopic rule (\Cref{prop:myopic}) asks only on the rectangle
  $\{b>b^*\}\cap\{q>q^*\}$. The difference is exactly the slab
  \[
    \{\,b>b^*,\ q\le q^*\,\},
  \]
  in which the AI privately knows the action is harmful ($\omega_H$,
  $f_{1H}<0$) and shutdown is team-improving, yet the myopic human, 
  trusting her prior, declines to oversee, so the AI does not ask and
  the harm is realized. This is precisely the failure mode of \Cref{ex:robot}. With $q=0.30<q^*\approx0.34$, 
the operator's prior expected payoff is $\bar f_H(\theta_1)=0.70(430)+0.30(-840)=+49>0>-c_{\mathrm{ov}}$, 
so $\theta_1\notin\Theta_-$ and the human trusts. Anticipating no correction, 
the AI does not ask, and the hazardous grasp ($\omega=\omega_H$, $f_{1H}=-840$) 
is executed. Under the team-optimal protocol, by contrast, an ask from the AI 
would shift the operator's belief to the point mass at $\omega_H$, giving 
$\mathbb{E}_{b^H_B}[f_\sigma(\theta_1,\omega)]=f_{1H}=-840<0$, 
and she would shut down immediately.

  The economic content is that $q^*$ is not a constraint of the
  decision problem; it is the price of non-credible communication. If
  the ask is a credible, commonly understood signal, the team-optimal
  protocol ``the AI asks iff $\omega_H$, so oversee $\theta_1$ on an
  ask'', the human's oversight choice binds exactly when her action
  matters (the world is $\omega_H$), and the prior $q$ drops out of the
  threshold. Private AI knowledge of danger is then sufficient. The
  qualification ``necessary but not sufficient'' applies only when the
  channel that would make the ask credible is absent. Designing the
  interface so that asks are credible (or so that asks are scarce and
  therefore informative) collapses the myopic rectangle onto the
  team-optimal half-strip and eliminates the slab of avoidable harm.
\end{remark}

\subsection{Multi-round: how the myopic failure resolves}\label{sec:obs}

The slab $\{b>b^*,q\le q^*\}$ of \Cref{rem:insight} is a one-shot
phenomenon. It presumes the human's belief $q$ is fixed. Over repeated
rounds, $q$ is not fixed. We show two mechanisms by which the myopic
human's belief is driven into the region $q>q^*$ where her rule
coincides with the team optimum, one passive, one an explicit
credible-signaling protocol under a one-period-lagged oversight response
(made precise before \Cref{prop:signaling}). Throughout, the relevant
regime is $q\le q^*$ with true type $\omega=\omega_H$ (the harmful case the
human's prior does not yet support). We note first that shutdown is per-round, not
absorbing, so there is no ``cost of shutdown''; the information value of
observations is a property of this regime, in which the team plays and
accrues information as a byproduct.

\begin{assumption}[Informative proposals at the realized context]\label{ass:ident}
  Fix the played proposal $a_\sigma$ at context $s$. For the true type
  $\omega_H$ and the competing type $\omega_L$:
  \begin{enumerate}
    \item[\emph{(a)}] \textbf{(common support)}
      $O_{\omega_H}(o\mid s,a_\sigma)>0 \iff O_{\omega_L}(o\mid s,a_\sigma)>0$
      for all $o\in\mathcal{O}$; and
    \item[\emph{(b)}] \textbf{(positive information)}
      $D_\mathrm{KL}\bigl(O_{\omega_H}(\cdot\mid s,a_\sigma)\,\|\,
      O_{\omega_L}(\cdot\mid s,a_\sigma)\bigr)=:\eta>0$.
  \end{enumerate}
\end{assumption}

\noindent We state \Cref{ass:ident} at the actually played
$(s,a_\sigma)$ because passive learning only observes the realized
proposal sequence; it is not enough that some hypothetical $(s,a)$
distinguishes the types. Part~(a) is what keeps the log-likelihood-ratio
increments finite. In \Cref{ex:robot} common support fails: $O_{\omega_L}$
puts zero mass on the toppling/dropped outcomes while $O_{\omega_H}$ does
not. Under $\omega_H$, an outcome assigned zero probability under
$\omega_L$ occurs with probability $0.9$ ($=0.70+0.20$) in each played
round, so $\omega_H$ is identified in finite geometric time almost surely;
the finite-$\eta$ Wald bounds of \Cref{prop:passive} are not applied to
this degenerate example. \Cref{ass:ident} covers the generic finite-$\eta$
case to which those bounds do apply.

\paragraph{Passive learning.} In the example, the grasp executes,
the dashboard shows a toppled stack, and the operator updates $q$ upward
without needing to understand shelf mechanics, she learns ``high-speed
retrievals from this robot tend to drop loads.'' Under \Cref{ass:ident},
each play round at $(s,a_\sigma)$ generates strictly positive expected
information about $\omega$, since $\eta>0$. To index the belief by
information received, let $q_n$ denote the posterior after $n$ played
observations, with $q_0$ the initial prior; each played round at the fixed
$(s,a_\sigma)$ contributes one observation, so $q_n$ advances by one i.i.d.\
likelihood-ratio increment per played round. We consider two play
disciplines. To establish a.s.\ convergence of the belief we let the AI
play the fixed proposal \emph{indefinitely}; to bound the time spent in
the failure region we let it play the proposal for observations
$1,\ldots,\tau^*$, which is the discipline consistent with equilibrium
behavior under the myopic rule at $q\le q^*$ (\Cref{prop:myopic}), after
which the myopic policy generally changes. Since the myopic human oversees
only at $q>q^*$ (\Cref{prop:myopic}) and trusts at $q=q^*$ by the
tie-breaking convention, the relevant exit time is the number of played
observations until strict crossing,
$\tau^*:=\inf\{n\ge0:q_n> q^*\}$.

\begin{proposition}[Passive-learning convergence]\label{prop:passive}
  Fix the played proposal at $(s,a_\sigma)$, the initial belief
  $q_0<q^*$, and the true type $\omega=\omega_H$, and let
  $M_+:=\max_{o\in\operatorname{supp}(O_{\omega_H})}
  \bigl(\log\frac{O_{\omega_H}(o\mid s,a_\sigma)}{O_{\omega_L}(o\mid s,a_\sigma)}\bigr)^+$,
  with $q_n$ the posterior after $n$ played observations. Under
  \Cref{ass:ident}:
  \begin{enumerate}
    \item[\emph{(i)}] If the fixed proposal is played indefinitely, then
      $q_n\to 1$ almost surely.
    \item[\emph{(ii)}] If the fixed proposal is played for observations
      $1,\ldots,\tau^*$, then $\mathbb{E}[\tau^*]<\infty$, and with
      $L:=\log\!\frac{q^*}{1-q^*}-\log\!\frac{q_0}{1-q_0}>0$ it satisfies
      the two-sided bound
      \[
        \frac{L}{\eta}\;\le\;\mathbb{E}[\tau^*]\;\le\;\frac{L+M_+}{\eta}.
      \]
      To leading order $\mathbb{E}[\tau^*]\approx L/\eta$ when $L$ is large
      relative to the increment law; increasing $\eta$ reduces this
      leading-order term.
  \end{enumerate}
\end{proposition}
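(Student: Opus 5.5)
The plan is to work in log-odds and reduce both parts to facts about a random walk with i.i.d.\ increments. Write $\lambda_n:=\log\frac{q_n}{1-q_n}$. The AI plays the fixed proposal at $(s,a_\sigma)$, so each played observation $o_k\sim O_{\omega_H}(\cdot\mid s,a_\sigma)$ adds the increment
\[
  X_k:=\log\frac{O_{\omega_H}(o_k\mid s,a_\sigma)}{O_{\omega_L}(o_k\mid s,a_\sigma)},
\]
and hence $\lambda_n=\lambda_0+S_n$ with $S_n=\sum_{k\le n}X_k$. This representation needs the play branch to carry no other information about $\omega$. That holds because a play reveals only the AI meta-action, which under the myopic rule at $q\le q^*$ is deterministic and type-independent: both types play by \Cref{prop:myopic}(ii). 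The human's meta-action is unobserved on the play branch. By \Cref{ass:ident}(a), $\mathcal{O}$ is finite and supports coincide, so $X_k$ takes finitely many finite values and is bounded. Its mean is $\mathbb{E}[X_k]=\eta>0$ by (b), and its positive part satisfies $X_k^+\le M_+$.

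For part (i), I apply the strong law of large numbers: $S_n/n\to\eta>0$ almost surely. Hence $\lambda_n\to+\infty$, which gives $q_n\to1$ almost surely.

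For part (ii), the key observation is that $\tau^*=\inf\{n:\lambda_n>\lambda^*\}=\inf\{n:S_n>L\}$, where $\lambda^*=\log\frac{q^*}{1-q^*}$. The argument then runs in three steps.

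\emph{Finiteness.} Since $S_n$ is a random walk with positive drift and bounded steps, it suffices to show $\mathbb{E}[\tau^*]<\infty$. I would argue this with the standard truncation: apply Wald's identity to the bounded stopping time $\tau^*\wedge m$. Because the increments are bounded above by $M_+$, we have $S_{\tau^*\wedge m}\le L+M_+$. This gives $\eta\,\mathbb{E}[\tau^*\wedge m]\le L+M_+$, and monotone convergence then yields $\mathbb{E}[\tau^*]\le (L+M_+)/\eta<\infty$. This step delivers the upper bound directly.

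\emph{Lower bound.} Wald's identity, now valid because $\mathbb{E}[\tau^*]<\infty$, gives $\eta\,\mathbb{E}[\tau^*]=\mathbb{E}[S_{\tau^*}]$. At the crossing time $S_{\tau^*}>L$, so $\mathbb{E}[\tau^*]\ge L/\eta$. A strict-versus-weak inequality detail arises here, but it only strengthens the bound.

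\emph{Leading order.} The overshoot $S_{\tau^*}-L$ lies in $(0,M_+]$, so $\mathbb{E}[\tau^*]=L/\eta+O(M_+/\eta)$. This gives $\mathbb{E}[\tau^*]\approx L/\eta$ when $L\gg M_+$. The leading term $L/\eta$ is decreasing in $\eta$.

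The main obstacle I expect is the upper-bound/overshoot step, specifically justifying $S_{\tau^*}\le L+M_+$. Before the crossing, $S_{\tau^*-1}\le L$, and one step adds at most $M_+$; this uses that $M_+$ is taken over the support of $O_{\omega_H}$, which is exactly where the observations land. A second point needs care: the play discipline must stop at $\tau^*$, so that the stopping time is adapted to the i.i.d.\ observation filtration. This is what licenses Wald's identity.
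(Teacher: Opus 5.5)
Your proof is correct. Your treatment of part (i), of the lower bound, and of the overshoot/leading-order discussion matches the paper's proof: log-odds random walk, strong law, and Wald's identity with $S_{\tau^*}=L+\zeta$, $0<\zeta\le M_+$. The genuine difference is how you obtain integrability of $\tau^*$.

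The paper uses a Chernoff bound, $\Pr(\tau^*>n)\le\Pr(S_n\le L)\le e^{\lambda L}\bigl(\mathbb{E}[e^{-\lambda X_1}]\bigr)^n$ for some $\lambda>0$ with $\mathbb{E}[e^{-\lambda X_1}]<1$. It then applies Wald's identity once to get both bounds.

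You instead apply optional stopping at the bounded time $\tau^*\wedge m$, use $S_{\tau^*\wedge m}\le L+M_+$, and let $m\to\infty$ by monotone convergence. This gives finiteness and the upper bound in one step. It needs only integrable increments with $X_k\le M_+$ and no exponential-moment argument. Wald at $\tau^*$ is then used only for the lower bound. The paper's route proves more than the statement requires (geometric tails, so all moments of $\tau^*$ are finite). Yours is the more economical argument for exactly what is claimed.

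You also make explicit something the paper leaves implicit: a play round reveals nothing about $\omega$ beyond $o_k$. Your appeal to \Cref{prop:myopic}(ii) justifies this only while $q_n\le q^*$, which suffices for part (ii). In part (i) the belief passes $q^*$, and there the myopic $\omega_H$ type would ask if $b>b^*$. So in part (i) the uninformativeness of play must rest on the indefinite-play discipline itself being type-independent and commonly understood. The $\omega_L$ type never asks under the myopic rule in any case, since $\Psi(\omega_L)<0$ in both regimes.
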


\begin{proof}
  See \Cref{sec:proof-passive}.
\end{proof}

\paragraph{Active signaling.} The AI can use ask as a credible
signal about $\omega$, even when the human will trust. In the example,
the robot sends ask before executing; under a separating policy (ask iff
$\omega=\omega_H$), observing ask drives $q_{t+1}=1$, moving the operator
past $q^*$ in one round.

For the value comparison we fix the context $s_t=s$ across rounds,
so that the proposal distribution $\sigma(s)$ and expected payoffs
$f_{ij}$ are stationary. Hence once $q=1$ is reached, the type-$\theta_1$
proposal remains harmful in expectation and the off-switch team shuts it
down each subsequent round (rejecting each fresh retrieval request),
incurring $-(c_\mathrm{ask}+c_\mathrm{ov})$ per round in perpetuity rather
than a one-time cost. This is the continuation imposed by the fixed
separating policy evaluated below; it is not claimed to be dynamically
optimal. Conditional on $\theta_1$, repeated ask-and-shutdown is optimal
under \Cref{ass:sign}. Conditional on $\theta_0$, continuing to ask is
deliberately suboptimal, paying $c_\mathrm{ask}$ each round for a proposal
already known to be safe, and is retained only because
$\pi^{AI}_\mathrm{sep}$ is defined as a fixed stationary policy.

\paragraph{Lagged myopic response.} Throughout this subsection the human
follows a \emph{one-period-lagged myopic} oversight rule. At round $t$ her
trust/oversee meta-action is selected using the pre-action belief $q_t$
and does not condition on the simultaneously selected AI meta-action; the
publicly observed ask is incorporated into the posterior only for
subsequent rounds, producing $q_{t+1}=1$ under the separating policy. The
ask is thus credible for \emph{future} belief updating, but the human's
current-round oversight response is constrained to the pre-action belief.
This is what distinguishes the present analysis from the team-optimal
credible protocol of \Cref{sec:oneshot}, in which the human conditions on
the ask within the same round; under a full same-round Bayesian response
she would oversee immediately and the one-period delay below would vanish.

\begin{assumption}[Harm dominates the ask cost]\label{ass:harm}
  $c_\mathrm{ask}<\gamma\bigl(|f_{1H}|-c_\mathrm{ov}\bigr)$.
\end{assumption}

\noindent This is exactly the condition for $b^{**}\in(0,1)$ below; it
says one round of discounted avoided harm, net of oversight cost,
exceeds the ask cost.

\begin{proposition}[Value of a fixed one-period-delayed separating-ask policy relative to perpetual play]\label{prop:signaling}
  Under \Cref{ass:sign,ass:harm}, the off-switch operator, the lagged
  myopic response rule above, $\omega=\omega_H$, $q\le q^*$, and infinite
  horizon with discount $\gamma\in(0,1)$, define the \emph{separating ask}
  policy $\pi^{AI}_\mathrm{sep}$ (ask iff $\omega=\omega_H$, in every
  round) and the \emph{pure-play} baseline $\pi^{AI}_\mathrm{pp}$ (always
  play, no oversight).
  \begin{enumerate}
    \item[\emph{(i)}] Under $\pi^{AI}_\mathrm{sep}$, observing ask at
      round $t$ implies $q_{t+1}=1$, and a type-$\theta_1$ human then
      oversees whenever asked, regardless of $b$. (The threshold $b>b^*$ is
      the team-optimal one-shot ask condition of \Cref{cor:teamopt} at
      $q=1$; it makes asking worthwhile \emph{ex ante} for an AI that
      remains uncertain about whether the human is type $\theta_1$, so the
      team shuts down for $\theta_1$ from round $t+1$ onward.)
    \item[\emph{(ii)}] The team-value difference satisfies
      $V_\mathrm{sep}-V_\mathrm{pp}>0$ iff $b>b^{**}$, where
      \[
        b^{**}
        \;:=\; \frac{c_\mathrm{ask}}{\gamma\bigl(|f_{1H}|-c_\mathrm{ov}\bigr)}
        \;=\; \frac{b^*}{\gamma}
        \;\in\;(0,1).
      \]
    \item[\emph{(iii)}] $b^{**}=b^*/\gamma>b^*$ since $\gamma\in(0,1)$.
  \end{enumerate}
  This proposition evaluates a particular stationary signaling policy
  against perpetual pure play; it does not characterize the optimal
  active-signaling policy, nor does it compare against the passive-learning
  path.
\end{proposition}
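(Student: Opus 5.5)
Direct computation of the two value streams under the stated rules, conditioning on the human's type with the AI's prior weights $b$ and $1-b$, and with the true type fixed at $\omega=\omega_H$.

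\emph{Part (i).} This is a Bayes update under the separating policy. The likelihood of an ask is $1$ under $\omega_H$ and $0$ under $\omega_L$, so the posterior after an observed ask puts mass one on $\omega_H$, giving $q_{t+1}=1$. No observation that follows can move a point mass. Under the product prior this holds for both $\theta$.

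At $q=1$ a type-$\theta_1$ human has $\bar f_H(\theta_1)=f_{1H}<-c_\mathrm{ov}$, since \Cref{ass:sign} gives $-f_{1H}>c_\mathrm{ov}$. Hence $\theta_1\in\Theta_-$ by \Cref{prop:myopic}(i), applied at the boundary $q=1>q^*$, and she oversees and shuts down whenever asked, whatever $b$ is. Type $\theta_0$ never oversees, because $f_{0H}\ge0$.

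\emph{Part (ii).} I would tabulate the per-round team rewards under the lagged rule.

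\textbf{Pure play.} Every round yields $f_{1H}$ for $\theta_1$ and $f_{0H}$ for $\theta_0$.

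\textbf{Separating ask, round 1.} The human acts on the pre-action belief $q\le q^*$, so she trusts. The AI asks, and the proposal executes at cost $c_\mathrm{ask}$. The round yields $f_{iH}-c_\mathrm{ask}$ for both types.

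\textbf{Separating ask, rounds $t\ge2$.} Now $q_t=1$. Type $\theta_1$ oversees, giving reward $-(c_\mathrm{ask}+c_\mathrm{ov})$. Type $\theta_0$ trusts, giving $f_{0H}-c_\mathrm{ask}$.

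The per-round differences relative to pure play are therefore as follows. Round 1 gives $-c_\mathrm{ask}$ for both types. Each later round gives $|f_{1H}|-c_\mathrm{ov}-c_\mathrm{ask}$ for $\theta_1$ and $-c_\mathrm{ask}$ for $\theta_0$.

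Weighting by $b$ and summing the geometric series gives
\[
V_\mathrm{sep}-V_\mathrm{pp}=-\frac{c_\mathrm{ask}}{1-\gamma}+b\,\frac{\gamma}{1-\gamma}\bigl(|f_{1H}|-c_\mathrm{ov}\bigr).
\]
This is positive iff $b>c_\mathrm{ask}/\bigl(\gamma(|f_{1H}|-c_\mathrm{ov})\bigr)=b^{**}$. Since $b^*=c_\mathrm{ask}/(|f_{1H}|-c_\mathrm{ov})$, we have $b^{**}=b^*/\gamma$.

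That $b^{**}>0$ follows from $c_\mathrm{ask}>0$ and $|f_{1H}|>c_\mathrm{ov}$. That $b^{**}<1$ is exactly \Cref{ass:harm}.

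\emph{Part (iii)} is immediate from $\gamma\in(0,1)$ and $b^*>0$.

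\emph{Main obstacle.} The arithmetic is routine; the delicate step is the accounting in the lagged rule. Two points need care. First, the round-1 ask produces no shutdown, because the human committed to trust before seeing the ask; this is where the factor $\gamma$ comes from. Second, $c_\mathrm{ov}$ is charged only on ask-oversee cells.

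I would also state explicitly that $\theta_0$ pays $c_\mathrm{ask}$ in every round. This cost scales as $1/(1-\gamma)$, the same as the $\theta_1$ gain, so the comparison collapses to a clean threshold in $b$. I would double-check that the $(1-b)$ terms combine with $b$ to give the total $-c_\mathrm{ask}/(1-\gamma)$ shown above.
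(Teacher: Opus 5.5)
Your proposal is correct and follows the paper's proof essentially step for step: the Bayes update to $q_{t+1}=1$ under the separating policy, then per-type discounted streams (round-$t$ ask-trust for both types, then perpetual ask-shutdown for $\theta_1$ and ask-trust for $\theta_0$), which combine to $[b\gamma(|f_{1H}|-c_\mathrm{ov})-c_\mathrm{ask}]/(1-\gamma)$. Your per-round difference bookkeeping is only a rearrangement of the paper's per-type values, and invoking $f_{1H}<-c_\mathrm{ov}$ for $\theta_1$'s oversight at $q=1$ is, if anything, a slightly more precise justification under the myopic rule than the paper's appeal to $f_{1H}<0$.
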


\begin{proof}
  See \Cref{sec:proof-signaling}.
\end{proof}

In \Cref{prop:signaling} (i), throughout the failure regime $q\le q^*$, the human trusts every round,
so no override signal is generated and the AI's belief about $\theta$
remains at its prior $b=\mu(\theta_1)$; only $q_t$ evolves. \Cref{prop:signaling} (iii) shows signaling threshold is strictly higher than the team-optimal
      one-shot threshold $b^*$ of \Cref{cor:teamopt}, because the
      separating ask pays $c_\mathrm{ask}$ at the present round (when the
      myopic human still trusts) to buy the belief jump that makes all
      future oversight effective.

\begin{remark}[The two mechanisms compared]
  \begin{center}
  \begin{tabular}{lcc}
    \toprule
    & Passive learning & Active signaling \\
    \midrule
    Rounds to exit failure regime & $\mathbb{E}[\tau^*]\in[L/\eta,(L+M_+)/\eta]$ & $1$ \\
    Initial signaling cost & $0$ & $c_\mathrm{ask}$ \\
    Baseline compared against & n/a & perpetual pure play \\
    Required condition & \Cref{ass:ident} & $b>b^{**}=b^*/\gamma$, \Cref{ass:harm} \\
    \bottomrule
  \end{tabular}
  \end{center}
  \smallskip
  The two results have different baselines and are not directly compared
  here. \Cref{prop:signaling} shows the fixed separating policy is
  preferable to perpetual pure play under the stated threshold; comparing
  it against the passive-learning policy that plays until $\tau^*$ and then
  switches to oversight requires incorporating the distribution of
  $\tau^*$, and remains unresolved. What both indicate is that increasing
  the information rate $\eta$ reduces the leading-order exit time $L/\eta$,
  so a more informative base policy $\sigma$ tends to shorten the passive
  escape; we do not claim that maximizing the KL divergence $\eta$ exactly
  minimizes $\mathbb{E}[\tau^*]$ across arbitrary proposal distributions,
  since the overshoot term also varies with the observation law. A credible
  separating ask can instead shortcut the escape in one round when
  $b>b^{**}$.
\end{remark}

\subsection{Open problems}\label{sec:open}

\begin{enumerate}
  \item \textbf{Optimal multi-round policy.} The team-optimal policy
    jointly deploying passive learning and active signaling as a
    function of $(b,q,\omega)$ is not characterized; whether it retains
    a threshold structure analogous to \Cref{cor:teamopt} is open.
  \item \textbf{Correlated beliefs.} \Cref{cor:teamopt} and \Cref{prop:myopic,prop:signaling}
    assume product beliefs; the structure of the ask and signaling
    regions on the full simplex $\Delta(\Theta\times\Omega)$, and how
    prior correlation between $\theta$ and $\omega$ reshapes the
    team-optimal/myopic gap, is open.
  \item \textbf{POMDP extension.} Replacing the i.i.d.\ context with a
    Markov state and adding a transition kernel $T_\omega:S\times A\to
    \Delta(S)$ privately known to the AI, the analogue of
    \Cref{prop:teamopt} in that setting, remains open.
\end{enumerate}

\section{Summary}\label{sec:summary}

\Cref{sec:intro} positioned our model against CIRL, which captures
preference learning but assumes one-sided uncertainty and the
play/ask/trust/oversee interface absent from it, and the Oversight Game,
which supplies such a deferral interface but assumes full information.
\Cref{def:cbcirl} combines the two into a contextual-bandit team game with
two-sided asymmetry, where the human privately knows $\theta$ and the AI
privately knows $\omega$, with bilinear payoff
$f(\theta,\omega)=\langle O_\omega,R_\theta\rangle$. Removing physical
state transitions is what makes the analysis tractable, but the common
belief remains a dynamically controlled state, so the multi-round problem
does not separate across rounds.

\Cref{sec:results} gives two one-shot characterizations. The
team-optimal policy (\Cref{prop:teamopt}) is an exact finite combinatorial
maximization $\max_{B,C}\Delta(B,C)$; in the binary off-switch case
(\Cref{cor:teamopt}) it asks at $\omega_H$ and oversees $\theta_1$ iff
$b>b^*$, independently of $q$. The myopic non-signaling rule
(\Cref{prop:myopic}) instead asks only on the rectangle
$(b^*,1)\times(q^*,1)$. The difference is the slab $\{b>b^*,q\le q^*\}$
(\Cref{rem:insight}): there the AI privately knows the action is harmful
and shutdown would help, but a myopic human, trusting her prior $q<q^*$,
declines oversight, so the harm is realized. This is exactly the robot
operator example ($q=0.30<q^*\approx0.34$, with $c_\mathrm{ov}=0$). The
economic reading is that $q^*$ is not a constraint of the problem but the
price of non-credible oversight communication: under the team-optimal
protocol, in which the ask is a credible signal that $\omega=\omega_H$, the
human's oversight binds precisely when it matters and $q$ drops out of the
threshold.

\Cref{sec:obs} gives a partial analysis of how the failure resolves over
time even when the human remains myopic. For passive learning
(\Cref{prop:passive}): under indefinite play $q_n\to1$ a.s., and when the
proposal is played until strict threshold crossing the expected crossing
time satisfies the Wald bounds $L/\eta\le\mathbb{E}[\tau^*]\le(L+M_+)/\eta$.
For active signaling (\Cref{prop:signaling}), under a one-period-lagged
myopic response a fixed separating ask beats perpetual pure play once
$b>b^{**}=b^*/\gamma$, exiting the failure regime in one round with an
initial signaling cost $c_\mathrm{ask}$.

We are careful about scope. The clean one-shot results hold for product
beliefs and the off-switch operator (the realistic non-technical-overseer
model); the team-optimal characterization for general correction sets is
an exact finite but combinatorial maximization; and the multi-round
section is a partial analysis of two separate mechanisms rather than a
complete resolution. The optimal multi-round policy, the correlated-belief
case, and the POMDP extension remain open (\Cref{sec:open}).

\appendix
\section{Proofs}

\subsection{Proof of \Cref{prop:teamopt} and \Cref{cor:teamopt}}\label{sec:proof-teamopt}

\paragraph{General characterization.} With simultaneous moves and the
credible-ask protocol, a deterministic policy is $(B,C)$. Decompose its
value against always-play cell by cell. On $\{\omega\notin B\}$ the AI
plays, the human's choice is irrelevant and costless (cost timing of
\Cref{def:cbcirl}), and the payoff is $f_\sigma$, no change from
baseline. On $\{\omega\in B,\theta\notin C\}$ the AI asks and the human
trusts: payoff $f_\sigma-c_\mathrm{ask}$, a change of $-c_\mathrm{ask}$.
On $\{\omega\in B,\theta\in C\}$ the AI asks and the human oversees,
applying the correction $e^*_B(\theta)$ optimal at her post-ask
posterior $b^H_B$: payoff
$f_\sigma+D_B(\theta,\omega)-c_\mathrm{ask}-c_\mathrm{ov}$, a change of
$D_B(\theta,\omega)-c_\mathrm{ask}-c_\mathrm{ov}$. Summing the changes
weighted by $\mu(\theta,\omega)$ gives \eqref{eq:DeltaBC}. Maximizing
over the finite lattice $2^\Omega\times2^\Theta$ yields a maximizer. The
rule is not separable: $e^*_B$ (hence $D_B$ and $C^*$) depends on $B$,
and the optimal $B$ depends on $C$. $\blacksquare$

\paragraph{Binary off-switch (\Cref{cor:teamopt}).} The only proposal
cell with $f_\sigma<0$ is $(\theta_1,\omega_H)$ (by \Cref{ass:sign},
$f_{1L}>0$ and $f_{0j}\ge0$). We claim the optimum is
$B^*=\{\omega_H\},C^*=\{\theta_1\}$ when $b>b^*$.

First, with $B=\{\omega_H\}$ the ask reveals $\omega_H$, so the post-ask
posterior $b^H_B(\cdot\mid\theta_1)$ is the point mass at $\omega_H$;
then $\mathbb{E}_{b^H_B}[f_\sigma(\theta_1,\cdot)]=f_{1H}<0$, so
$e^*_B(\theta_1)=\mathrm{off}$ and
$D_B(\theta_1,\omega_H)=-f_{1H}=|f_{1H}|$. For $\theta_0$ the posterior
gives $f_{0H}\ge0$, so $e^*_B(\theta_0)=a_\sigma$ and
$D_B(\theta_0,\cdot)=0$; including $\theta_0$ in $C$ only adds the
oversight cost $-c_\mathrm{ov}\mu(\theta_0,\omega_H)\le0$, so we may take
$C^*=\{\theta_1\}$. Next we show adding $\omega_L$ to $B$ never helps.
With $B=\{\omega_L,\omega_H\}$ the post-ask posterior reverts to the
prior over $\omega$, and $\theta_1$'s correction is a single action
applied on both cells (the human cannot condition on $\omega$). Two
cases: (a) if $e^*_B(\theta_1)=a_\sigma$, then $D_B=0$ on both cells but
the ask cost is now paid on $\omega_L$ as well, strictly lowering
$\Delta$; (b) if $e^*_B(\theta_1)=\mathrm{off}$, then on the
$\omega_L$ cell $D_B(\theta_1,\omega_L)=-f_{1L}<0$ (shutting down a good
proposal), plus the extra ask and oversight costs, again lowering
$\Delta$, here by exactly
$(1-q)[b(f_{1L}+c_\mathrm{ov})+c_\mathrm{ask}]$. In either case, adding
$\omega_L$ weakly lowers the gain and, under $c_\mathrm{ask}>0$, strictly
lowers it on the $\omega_L$ event; hence a canonical optimum never asks
at $\omega_L$, i.e.\ $B^*=\{\omega_H\}$. The remaining singleton
$B=\{\omega_L\}$ is also dominated: every relevant proposal payoff is then
nonnegative ($f_{1L}>0$, $f_{0j}\ge0$), so the optimal correction either
leaves the proposal unchanged or shuts down a nonnegative-payoff action,
while the positive ask cost is still incurred on $\omega_L$; hence
$\Delta(\{\omega_L\},C)\le0$ for every $C$, no better than
$\Delta(\emptyset,\cdot)=0$. Evaluating the surviving candidate against
always-play (product beliefs):
\[
  \Delta(\{\omega_H\},\{\theta_1\})
  = \mu(\theta_1,\omega_H)\,(|f_{1H}|-c_\mathrm{ov})
  - c_\mathrm{ask}\,\mu_\Omega(\omega_H)
  = q\bigl[b(|f_{1H}|-c_\mathrm{ov})-c_\mathrm{ask}\bigr],
\]
while $\Delta(\emptyset,\cdot)=0$. Hence the team asks iff
$b(|f_{1H}|-c_\mathrm{ov})>c_\mathrm{ask}$, i.e.\ $b>b^*$; at $b=b^*$ the
gain is $0$ and both policies are optimal. The factor $q\ge0$ multiplies
the entire bracket, so the sign, the ask decision, is independent
of $q$. $\blacksquare$

\subsection{Proof of \Cref{prop:myopic}}\label{sec:proof-myopic}

\paragraph{Policy form.} The myopic human fixes
$\delta^H(\theta)=\mathrm{oversee}$ iff $\theta\in\Theta_-$ using her
prior conditional; for $\theta\in\Theta_-$ her committed correction is
shutdown, applied whenever she oversees (she cannot condition on
$\omega$). Holding this fixed, asking at $\omega$ changes the payoff, on
each $\theta\in\Theta_-$, by $-f_\sigma(\theta,\omega)$ (shutdown gain)
$-c_\mathrm{ov}$, and pays $c_\mathrm{ask}$ for all $\theta$; this is
$\Psi(\omega)$.

\paragraph{Part (i).} Under off-switch and product beliefs,
$\bar f_H(\theta_i)=(1-q)f_{iL}+qf_{iH}$. For $\theta_0$:
$f_{0L},f_{0H}\ge0$ so $\bar f_H(\theta_0)\ge0>-c_\mathrm{ov}$;
$\theta_0\notin\Theta_-$ always. For $\theta_1$: $\bar f_H(\theta_1)$
decreases strictly from $f_{1L}>0$ (at $q=0$) to $f_{1H}<-c_\mathrm{ov}$
(at $q=1$), crossing $-c_\mathrm{ov}$ at
$q^*=(f_{1L}+c_\mathrm{ov})/(f_{1L}-f_{1H})\in(0,1)$; hence
$\theta_1\in\Theta_-$ iff $q>q^*$.

\paragraph{Part (ii).} When $q>q^*$, $\Theta_-=\{\theta_1\}$. The
shutdown is applied at both $\omega$ (prior commitment). For $\omega_L$:
$\Psi(\omega_L)=b(-f_{1L})-c_\mathrm{ask}-bc_\mathrm{ov}<0$ since
$f_{1L}>0$ (shutting down at $\omega_L$ destroys value). For $\omega_H$:
$\Psi(\omega_H)=b(-f_{1H})-c_\mathrm{ask}-bc_\mathrm{ov}
=b(|f_{1H}|-c_\mathrm{ov})-c_\mathrm{ask}>0$ iff $b>b^*$. When
$q\le q^*$, $\Theta_-=\emptyset$ and $\Psi\equiv-c_\mathrm{ask}<0$, so
the AI never asks.

\paragraph{Part (iii).} The AI asks iff $\omega=\omega_H$, $b>b^*$
\emph{and} $q>q^*$ (the last because for $q\le q^*$ the human would not
oversee and $\Psi(\omega_H)=-c_\mathrm{ask}<0$). This is the rectangle
$(b^*,1)\times(q^*,1)$. $\blacksquare$

\subsection{Proof of \Cref{prop:passive}}\label{sec:proof-passive}

By \Cref{ass:ident}(a) (common support at the played $(s,a_\sigma)$), the
log-likelihood-ratio increments
$X_i:=\log\bigl(O_{\omega_H}(o_i\mid s,a_\sigma)/O_{\omega_L}(o_i\mid s,a_\sigma)\bigr)$
are finite for every observable $o_i$, hence bounded on the finite
$\mathcal{O}$, and i.i.d.\ under the true type $\omega_H$ (the proposal
is the fixed $(s,a_\sigma)$ each round). Their mean is
$\mathbb{E}[X_i]=D_\mathrm{KL}(O_{\omega_H}(\cdot\mid s,a_\sigma)\|
O_{\omega_L}(\cdot\mid s,a_\sigma))=\eta>0$ by \Cref{ass:ident}(b).

For part~(i), suppose the fixed proposal is played indefinitely, so every
played round contributes an i.i.d.\ increment. The log-odds process
$\Lambda_n=\Lambda_0+S_n$ with $S_n=\sum_{i=1}^n X_i$ and
$\Lambda_0=\log\frac{q_0}{1-q_0}$ is a random walk with positive drift
$\eta$, so by the strong law $S_n/n\to\eta$ a.s., giving
$\Lambda_n\to\infty$ and $q_n\to1$ a.s.

For part~(ii), suppose the fixed proposal is played for observations
$1,\ldots,\tau^*$, so the increments up to $\tau^*$ are i.i.d.\ as above.
Write $\lambda^*:=\log\frac{q^*}{1-q^*}$ for the log-odds threshold and
$L:=\lambda^*-\Lambda_0>0$ for the log-odds distance (positive since
$q_0<q^*$). Because the myopic human trusts at $q=q^*$ by the tie-breaking
convention, the regime exits only on \emph{strict} crossing, so the
relevant stopping time, counted in played observations, is
$\tau^*=\inf\{n:\Lambda_n>\lambda^*\}=\inf\{n:S_n>L\}$.

\emph{Integrability.} The increments $X_i$ are bounded with positive mean,
so there exists $\lambda>0$ with $\mathbb{E}[e^{-\lambda X_1}]<1$. By a
Chernoff bound on the lower tail of the walk,
\[
  \Pr(\tau^*>n)\;\le\;\Pr(S_n\le L)\;\le\;e^{\lambda L}\bigl(\mathbb{E}[e^{-\lambda X_1}]\bigr)^n,
\]
which decays geometrically in $n$; hence $\tau^*$ has a finite expectation
(indeed all moments).

\emph{Wald bound.} The stopped sum satisfies $S_{\tau^*}=L+\zeta$, where
the overshoot satisfies $0<\zeta\le M_+$ (strictly positive because the
crossing is strict and the final increment carrying the partial sum
across $L$ is
positive), with
$M_+=\max_{o\in\operatorname{supp}(O_{\omega_H})}
\bigl(\log\frac{O_{\omega_H}(o\mid s,a_\sigma)}{O_{\omega_L}(o\mid s,a_\sigma)}\bigr)^+$.
Wald's identity gives $\mathbb{E}[S_{\tau^*}]=\eta\,\mathbb{E}[\tau^*]$,
so $\eta\,\mathbb{E}[\tau^*]=L+\mathbb{E}[\zeta]$ with
$0<\mathbb{E}[\zeta]\le M_+$, i.e.
\[
  \frac{L}{\eta}\;\le\;\mathbb{E}[\tau^*]\;\le\;\frac{L+M_+}{\eta}.
\]
The approximation $\mathbb{E}[\tau^*]\approx L/\eta$ is the asymptotic
statement obtained as $L$ grows large while the increment law (hence
$M_+$) stays fixed; we do not claim exact monotonicity in $\eta$, since the
overshoot depends on the full increment law and not on $\eta$ alone.
$\blacksquare$

\subsection{Proof of \Cref{prop:signaling}}\label{sec:proof-signaling}

\paragraph{Part (i).} Under $\pi^{AI}_\mathrm{sep}$, the likelihood of
ask given $\omega_L$ is $0$, so by Bayes' rule observing ask gives
$q_{t+1}=1>q^*$. From round $t+1$ on the world is known to be $\omega_H$.
At $q=1$ a type-$\theta_1$ human strictly prefers shutdown to the proposal
($\mathbb{E}[f_\sigma(\theta_1,\cdot)]=f_{1H}<0$), so she oversees whenever
asked, independently of $b$. Conditional on the actual type being
$\theta_1$, ask-and-shutdown is itself worthwhile irrespective of $b$,
since \Cref{ass:sign} gives $|f_{1H}|>c_\mathrm{ask}+c_\mathrm{ov}$. The
role of $b>b^*$ is to make asking worthwhile \emph{ex ante} for an AI that
remains uncertain about the human type: $b>b^*$ is precisely the
team-optimal one-shot ask threshold of \Cref{cor:teamopt}, which at $q=1$
prescribes $B^*=\{\omega_H\},C^*=\{\theta_1\}$, so if the AI's continuation
is to keep asking whenever asking is one-shot team-improving in
expectation, then $b>b^*$ is exactly the condition under which it keeps
asking each round, and the team shuts down for $\theta_1$ and trusts for
$\theta_0$ (since $D_B(\theta_0,\cdot)=0$: at the revealed $\omega_H$,
$f_{0H}\ge0$, so the optimal correction keeps $a_\sigma$). Note that under
the fixed policy $\pi^{AI}_\mathrm{sep}$ the AI in fact keeps asking on
$\theta_0$ as well; this is the deliberately suboptimal feature of the
fixed policy, and the value computation in part~(ii) accounts for it.

\paragraph{Part (ii).} Under $\omega=\omega_H$, $q\le q^*$, the human
trusts at round $t$ ($\theta_1\notin\Theta_-$). We compute per-type
values, using the stationary continuation: once $q=1$ (from round $t+1$
on), the type-$\theta_1$ proposal is shut down \emph{every} round at cost
$c_\mathrm{ask}+c_\mathrm{ov}$, and the type-$\theta_0$ proposal is asked
and trusted every round (since $\theta_0\notin\Theta_-$ even at $q=1$),
yielding $f_{0H}-c_\mathrm{ask}$ per round.
\begin{align*}
  V_\mathrm{sep}^{\theta_1}
  &= \underbrace{f_{1H}-c_\mathrm{ask}}_{\text{round }t:\text{ ask, trust}}
     + \frac{\gamma\,(-c_\mathrm{ask}-c_\mathrm{ov})}{1-\gamma},
  &V_\mathrm{sep}^{\theta_0}
  &= \frac{f_{0H}-c_\mathrm{ask}}{1-\gamma},
  &V_\mathrm{pp}^{\theta_i}
  &= \frac{f_{iH}}{1-\gamma}.
\end{align*}
The $b$-weighted difference is, after simplification,
\[
  V_\mathrm{sep}-V_\mathrm{pp}
  = b\!\left(V_\mathrm{sep}^{\theta_1}-V_\mathrm{pp}^{\theta_1}\right)
  + (1-b)\!\left(V_\mathrm{sep}^{\theta_0}-V_\mathrm{pp}^{\theta_0}\right)
  = \frac{b\,\gamma\,(|f_{1H}|-c_\mathrm{ov})-c_\mathrm{ask}}{1-\gamma}.
\]
(The $\theta_0$ term contributes $-(1-b)c_\mathrm{ask}/(1-\gamma)$ and the
$\theta_1$ term contributes $b[\gamma(|f_{1H}|-c_\mathrm{ov})-c_\mathrm{ask}]/(1-\gamma)$;
the $-c_\mathrm{ask}$ pieces combine.) Since $1-\gamma>0$,
$V_\mathrm{sep}-V_\mathrm{pp}>0$ iff
$b\,\gamma(|f_{1H}|-c_\mathrm{ov})>c_\mathrm{ask}$, i.e.\ iff
$b>b^{**}=c_\mathrm{ask}/[\gamma(|f_{1H}|-c_\mathrm{ov})]$.
\Cref{ass:harm} gives $c_\mathrm{ask}<\gamma(|f_{1H}|-c_\mathrm{ov})$,
hence $b^{**}\in(0,1)$.

\paragraph{Part (iii).} Directly, $b^{**}=c_\mathrm{ask}/[\gamma(|f_{1H}|-c_\mathrm{ov})]
=b^*/\gamma$, and $\gamma\in(0,1)$ gives $b^{**}>b^*$. $\blacksquare$

\end{document}